\DeclareMathOperator{\E}{\mathbb{E}}
\DeclareMathOperator{\Var}{Var}
\newcommand{\NIS}{\text{NIS}}
\newcommand{\pckm}{\text{pckm}}
\newcommand{\km}{\text{km}}
\newcommand{\CC}{\text{CC}}
\DeclareMathOperator{\D}{\mathcal{D}}
\DeclareMathOperator{\Y}{\mathcal{Y}}
\DeclareMathOperator{\C}{\mathcal{C}}
\DeclareMathOperator{\M}{\mathcal{M}}
\begin{document}

\title{Human-in-the-Loop Visual Re-ID for Population Size Estimation}

\titlerunning{Human-in-the-Loop Visual Re-ID for Population Size Estimation}

\author{Gustavo Perez\inst{1,2}\orcidlink{0000-0003-3880-8075} \and
Daniel Sheldon\inst{2} \and
Grant Van Horn\inst{2} \and 
Subhransu Maji\inst{2}\orcidlink{0000-0002-3869-9334}}

\authorrunning{G.~Perez et al.}

\institute{University of California, Berkeley \\
\email{gperezs@berkeley.edu}\\
\and
University of Massachusetts, Amherst\\
\email{\{sheldon,gvh,smaji\}@cs.umass.edu}}

\maketitle

\begin{abstract}

Computer vision-based re-identification (Re-ID) systems are increasingly being deployed for estimating population size in large image collections. However, the estimated size can be significantly inaccurate when the task is challenging or when deployed on data from new distributions. We propose a human-in-the-loop approach for estimating population size driven by a pairwise similarity derived from an off-the-shelf Re-ID system. Our approach, based on nested importance sampling, selects pairs of images for human vetting driven by the pairwise similarity, and produces asymptotically unbiased population size estimates with associated confidence intervals. We perform experiments on various animal Re-ID datasets and demonstrate that our method outperforms strong baselines and active clustering approaches. In many cases, we are able to reduce the error rates of the estimated size from around 80\% using CV alone to less than 20\% by vetting a fraction (often less than 0.002\%) of the total pairs. The cost of vetting reduces with the increase in accuracy and provides a practical approach for population size estimation within a desired tolerance when deploying Re-ID systems.\footnote{Code available at: \href{https://github.com/cvl-umass/counting-clusters}{https://github.com/cvl-umass/counting-clusters}}

  \keywords{Human-in-the-loop \and Re-ID\and Importance sampling}
\end{abstract}

\section{Introduction}
\label{sec:intro}
\vspace{-5pt}

\begin{figure}[t]
\centering
\begin{tabular}{cc}
\multirow{-5.5}{*}[1pt]{\includegraphics[width=0.53\linewidth]{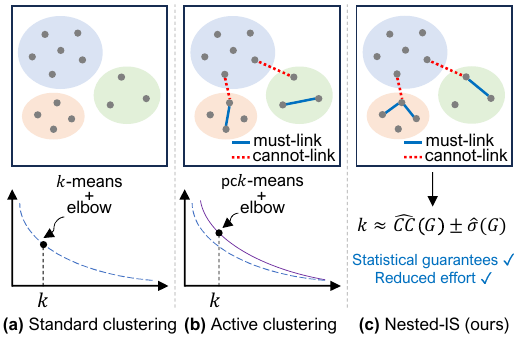}} & 
\hspace{0pt} \includegraphics[width=0.44\linewidth]{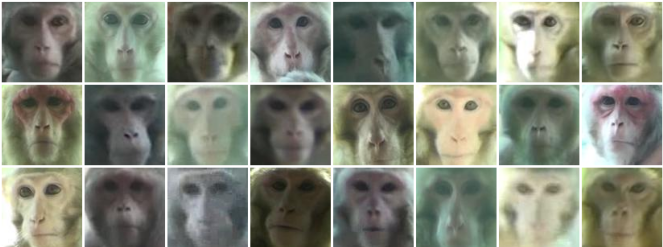}\\
& \hspace{0pt} \includegraphics[width=0.45\linewidth]{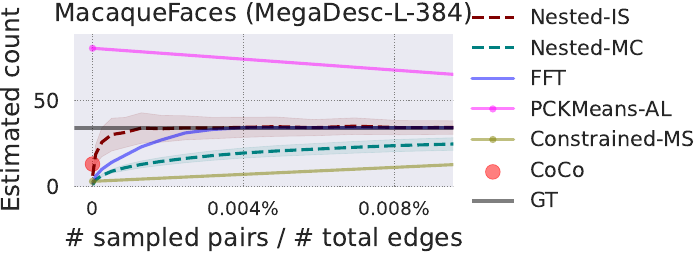}

\end{tabular}

\vspace{-10pt}
\caption{
\textbf{Estimating Population Size Using a Re-ID System}. \textbf{(a)} A simple approach involves using $k$-means clustering on image embeddings derived from the Re-ID system and selecting the optimal $k$ using the ``elbow heuristic." \textbf{(b)} Active clustering (e.g., pck-means~\cite{basu2004}) employs pairwise constraints to enhance clustering accuracy. \textbf{(c)} Our method leverages nested importance sampling to produce asymptotically unbiased estimates and confidence intervals on $k$ directly. \textbf{(Right)} On the MacaqueFaces dataset~\cite{MacaqueFaces}, our approach (Nested-IS) converges to the true $k=34$ with fewer constraints than alternative methods, but also provides confidence intervals for the estimate (shown as the shaded red region) for any amount of human feedback.}
\vspace{-15pt}
\label{fig:confidenceintervals}
\end{figure}

Computer vision-based re-identification (Re-ID) is increasingly being deployed to analyze large image collections for species-level population monitoring~\cite{schneider2019past,vidal2021perspectives}. The association of individuals across camera sensors provides valuable data about animal movement, population dynamics, geographic distribution, and habitat use, which in turn can inform conservation goals and ecological models~\cite{tuia2022perspectives,steenweg2017scaling,burton2015wildlife}. Computer vision can reduce the manual labor associated with individual identification, and there is significant effort from the community to develop tools that facilitate animal Re-ID across a wide range of animal species~\cite{beery2019efficient,crall2013hotspotter,bolger2012computer,anderson2010computer,megadesc,weideman2020extracting}.

We consider the task of estimating population size, that is, the number of individuals, in a collection of images using a Re-ID system. Beyond surveys, this quantity can inform techniques for category discovery and incremental learning of AI systems~\cite{gcd,pimgcd,joseph2022}. Since the number of individuals is often unbounded, Re-ID is cast as a pairwise classification task and involves predicting whether two images are of the same individual or not. However, estimating the population size from pairwise similarities is non-trivial. One approach is to use a clustering algorithm such as $k$-means with the ``elbow heuristic''~\cite{kmeans_elbow} to pick $k$. However, there are many algorithms and heuristics to choose from, making the process subjective. Estimated pairwise similarity can also be a poor approximation to the true similarity, especially when models are deployed out-of-distribution or on challenging tasks where the performance of Re-ID is low. 
For instance, using $k$-means with the elbow heuristic on the MacaqueFaces dataset~\cite{MacaqueFaces} using the state-of-the-art ``MegaDescriptor''~\cite{megadesc} results in 80 clusters when the true number of individuals is 34, as seen in Fig.~\ref{fig:confidenceintervals}. Table~\ref{table:datasets} shows that estimated population size across a variety of animal species and clustering approaches are sometimes off by factor of two or more, which can impact downstream tasks.

Our approach employs statistical estimation to compute the population size using human feedback on pairwise similarity. We develop an estimator based on nested importance sampling, each iteration driven by the approximate pairwise similarity. Feedback in the form of ``same'' or ``not'' on a set of sampled edges is used to estimate the number of clusters as well as to provide a confidence interval. We theoretically demonstrate that the estimator is unbiased, i.e., it converges to the true number of clusters in expectation, and an end user can stop screening when the confidence intervals are sufficiently small. Additionally, we contribute a strong baseline based on farthest-first traversal (FFT) to directly estimate the population size, which, although it does not offer statistical guarantees, outperforms alternative active clustering approaches. Our methods can employ any off-the-shelf image similarity and do not involve fine-tuning deep representations, thus can be practical for non-AI experts.

We conduct experiments on seven datasets where the goal is to estimate the number of individuals they contain. These datasets span animal species and exhibit different data distributions, including both short and long-tailed, with varying number of individuals. We also experiment with different image representations from the MegaDescriptor library~\cite{megadesc}, which provides a unified deep network for various animal Re-ID tasks. The accuracy of the Re-ID system also varies across datasets and are assumed to be unknown, providing a realistic use case for deploying the models. 
Our approach is more accurate for the same amount of human supervision (sampled pairs) than competing baselines. We also show that the estimator has low bias and produces accurate confidence intervals. In summary, our main contributions are:  
\begin{itemize}
    \item We study the challenges involved in using Re-ID systems for estimating population size across a range of animal Re-ID datasets.
    \item We propose a novel approach combining nested importance sampling with human-in-the-loop feedback. 
    that produces asymptotically unbiased estimates of population size (\S~\ref{sec:method}).
    \item We design confidence intervals that provide intuitive feedback for guiding human effort and setting stopping conditions %
    (\S~\ref{sec:nestedis}).
    \item We report extensive experiments on a benchmark of seven animal Re-ID datasets with different data distributions, demonstrating that our method achieves significantly lower error than strong baselines with similar human effort. In most cases our approach produces estimates close to the true population size using human feedback on less than 0.004\% of all pairs (\S~\ref{sec:evaluation}).
    \item We use our framework to estimate the number of categories in a dataset for generalized category discovery~\cite{gcd,pimgcd,joseph2022} and measure the impact on clustering accuracy,  
    on animal Re-ID and fine-grained classification 
    (\S~\ref{sec:evaluation}).
\end{itemize}

\vspace{-5pt}
\section{Related Work}\label{sec:relatedwork}
\vspace{-5pt}
The task of estimating population size in a dataset is closely related to the problem of grouping individuals using a Re-ID system. However, it is possible to estimate the population size using statistical approaches, such as ours, without fully solving the grouping problem. There is a vast literature in computer vision, especially in face recognition, on building accurate Re-ID systems, as well as using these representations to group individuals. We briefly review these approaches. Our approach focuses on animal Re-ID tasks. This task can be relatively easy for some species that have distinctive patterns but significantly more challenging for others. We review prior work on animal Re-ID and introduce the benchmarks and representations we experiment with.

\vspace{-5pt}
\paragraph{Re-ID Systems} There is significant prior work on person Re-ID based on face and whole-body recognition (see~\cite{erik_facesurvey} for a more complete survey). Early techniques, such as Eigenfaces~\cite{turk1991eigenfaces}, have been replaced by modern deep learning approaches~\cite{parkhi2015deep,liu2017sphereface,liu2017sphereface} trained on large datasets~\cite{guo2016ms,huang2008labeled}. While significant advances have been made, there are also known issues with poor generalization and bias of Re-ID systems in the presence of demographic shifts~\cite{learned2020facial}. In comparison, techniques for animal Re-ID are less mature. These techniques range from SIFT~\cite{sift} and Superpoint-based~\cite{superpoint} matching approaches (e.g., WildID~\cite{bolger2012computer} and HotSpotter~\cite{crall2013hotspotter}) that work relatively well for species with characteristic patterns on their skin or coats (e.g., Zebras and Jaguars), to complex pipelines designed to identify visual characteristics specific to particular species (e.g., whisker patterns of polar bears~\cite{anderson2010computer}). Deep learning approaches have also been developed for some species (e.g., Chimpanzees and Bears) but the training datasets are relatively small. Recent work~\cite{megadesc} has attempted to train deep learning models across species by consolidating animal Re-ID datasets and has shown that the model generalizes across species and significantly outperforms prior work, including off-the-shelf image representations such as CLIP~\cite{clip} and DINOv2~\cite{dino}. We utilize a set of datasets from their collection (Table~\ref{table:datasets}) and base our population size estimation on the various pre-trained deep networks they provide. Their best-performing model is a Swin-transformer~\cite{SwinT} trained with ArcFace loss~\cite{arcface} on the collection of datasets. However, the problem is far from solved -- for example, on the WhaleSharkID dataset~\cite{WhaleSharkID}, the performance of the Re-ID system is around 62\%, which poses a challenge for population size estimation.

\vspace{-5pt}
\paragraph{Clustering}
The population size can be estimated by determining the number of clusters in a dataset. A common approach involves using \textit{clustering algorithms}, such as $k$-means~\cite{kmeans}, mean-shift~\cite{meanshift}, which operate directly on embeddings, or graph-based approaches~\cite{graphcuts, normalized-cuts, erdos-renyi-clustering} that incorporate pairwise similarity, and determining the number of clusters based on a \textit{heuristic}. For example, one can compute the within-cluster sum of squares (WCSS) for different values of $k$ in $k$-means and select the optimal one based on the ``elbow heuristic'' -- the point in the curve where improvements diminish (see Fig.~\ref{fig:confidenceintervals}). However, there are many clustering algorithms and heuristics available, making the process subjective. We find that population size estimates using these methods can be significantly inaccurate, even with state-of-the-art embeddings (see Table~\ref{table:datasets}). More complex approaches, particularly those incorporating tracking information and sophisticated graph-based clustering (e.g.,~\cite{erdos-renyi-clustering}), have been proposed for grouping in videos. Our primary focus is on image-based approaches using simple clustering algorithms as they are broadly applicable.

\vspace{-5pt}
\paragraph{Active Clustering}
Human feedback can be incorporated to improve clustering in various ways. One can fine-tune the deep network using metric learning~\cite{kulis2013metriclearning,kaya2019deepmetric} approaches to improve the underlying similarity using human feedback, for example thorough pairwise~\cite{metric_siamese} and triplet-based~\cite{metric_triplet} learning. However, these methods require significant compute resources and expertise to set various hyperparameters associated with training, and may not be practical for non-experts. Moreover, fine-tuning on uncurated data often encountered in a real-world deployment of the Re-ID system is rarely effective. Hence, we focus on active clustering approaches that incorporate constraints to improve clustering for a fixed embedding. We compare with a constrained $k$-means algorithms that use `must-link' and `cannot-link' constraints within the $k$-means~\cite{basu2004}. Constraints can be selected using a \emph{farthest-first} scheme~\cite{basu2004}, or by running $k$-means with a large $k$ and picking constraints to merge the small clusters into larger ones~\cite{cobra}. We also develop a baseline based on the farthest-first traversal (FFT) scheme that is directly aimed at estimating the number of clusters. 
While these approaches improve over the baseline $k$-means algorithm they do not provide statistical guarantees on the estimate. Our proposed sampling-based approach results in a consistent estimator of the cluster count and produces accurate confidence intervals (\S~\ref{sec:results}). The amount of human effort required depends both on the quality of the pairwise similarity as well as the level of precision needed for estimation.

\vspace{-5pt}
\paragraph{Related Problems} Apart from population surveys, estimating the number of clusters is often the primary goal for many tasks. Life-long learning systems must discover and learn novel categories during long-term deployment. In generalized category discovery (GCD)~\cite{gcd,pimgcd}, the goal is to cluster images given the labels for a subset of images in the presence of novel categories. This problem is more challenging than traditional semi-supervised learning due to the open-world setting. However, we find that the performance of existing GCD approaches is limited by the accuracy of estimating the number categories in a dataset. Often, existing approaches make the unrealistic assumption that this quantity is known. We show that our approach for estimating the number of clusters has a higher impact on improving GCD than using active clustering algorithms. 
Our work is also related to approaches that improve statistical estimation by combining human effort and model predictions, such as ISCount~\cite{iscount}, DISCount~\cite{discount}, and Prediction-Powered Inference~\cite{ppi}. For example, both ISCount and DISCount use importance sampling to estimate the population mean, and our work extends this idea to a pairwise setting.

\begin{figure}[t]
    \centering
    \includegraphics[width=0.5\linewidth]{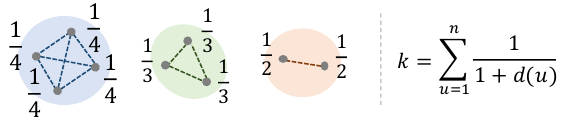}\\
    \vspace{-5pt}
    \caption{\textbf{Counting clusters in a graph.} The number of clusters $k = \sum_{u=1}^n 1/(1+d(u))$, where $d(u)$ is degree of node $u$. In this example $k = 4 \times 1/4 + 3 \times 1/3 + 2 \times 1/2 = 3$.}
    \vspace{-10pt}
    \label{fig:degrees}
\end{figure}

\vspace{-5pt}
\section{Method}\label{sec:method}
\vspace{-5pt}
Assume that we have a collection of images $\D=\{x_i\}_{i=1}^n$ where each image $x_i$ belongs to a cluster $y_i \in {\cal Y}$. Our goal is to calculate the total number of clusters $K=|{\cal Y}|$ in $\D$. We assume labels $y_i \in \Y$ are unknown but we have access to an image embedding $\Phi(\cdot)$ which can be used to compute an approximate pairwise similarity $\hat{s}(x,z) = f\left(\Phi(x), \Phi(z)\right)$ between images $x$ and $z$ for some function $f$ (such as the dot product). The true similarity is $s(x,z) = 1$ if images $x$ and $z$ belong to the same cluster and is $s(x,z) = 0$ otherwise. We assume this can be obtained by using human feedback in the form of ``same'' or ``not" for the pair of images. Our goal is to estimate $K$  
as accurately as possible using a small amount of human feedback given the approximate similarity $\hat{s}(x,z)$. 

\vspace{-5pt}
\begin{lemma}
\label{lemma:lemma1}
Consider a graph  $G=(V,E)$ with vertices $u,v \in V=\{1,...,n\}$ corresponding to images $x_u$, with an edge  $e_{uv} \in E$ between  $u$  and $v$  if the images \( x_u \) and \( x_v \) belong to the same cluster. Let $d(u) = |\{ e_{uv} \in E \}|$ be the degree of vertex \( u \). Then the number of clusters \( K \) in the dataset \(\mathcal{D}\) is equal to the number of connected components in \( G \), and can be computed as: 
\begin{equation}\label{eq:cc}
   K = \CC(G) = \sum_{u=1}^n \frac{1}{1+d(u)}.
\end{equation}
\end{lemma}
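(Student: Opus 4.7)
The plan is to exploit the fact that the graph $G$ built from the true same-cluster relation is a disjoint union of cliques, one per cluster, and then compute the sum $\sum_u 1/(1+d(u))$ cluster by cluster. This reduces the identity to a one-line arithmetic check inside each clique.

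First, I would observe that the relation ``$x_u$ and $x_v$ are in the same cluster'' is an equivalence relation on $V$, so it partitions $V$ into the $K$ equivalence classes $V_1,\dots,V_K$ corresponding to the clusters. Because $e_{uv}\in E$ exactly when $u$ and $v$ are in the same cluster (and $u\neq v$, by the standard convention that $G$ has no self-loops), the induced subgraph on each $V_k$ is a complete graph $K_{|V_k|}$, and there are no edges between distinct $V_k$'s. This immediately gives the connected-components claim: the connected components of $G$ are exactly $V_1,\dots,V_K$, so $\mathrm{CC}(G)=K$.

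Next I would evaluate the degree sum. Fix a cluster $V_k$ of size $m_k=|V_k|$. Since $V_k$ induces a clique and no other edges are incident to its vertices, every $u\in V_k$ has $d(u)=m_k-1$, hence $1/(1+d(u))=1/m_k$. Summing over the $m_k$ vertices of $V_k$ yields
\begin{equation*}
\sum_{u\in V_k}\frac{1}{1+d(u)} \;=\; m_k\cdot\frac{1}{m_k} \;=\; 1.
\end{equation*}
Summing this identity over $k=1,\dots,K$ and using that $\{V_k\}$ partitions $V$ gives $\sum_{u=1}^{n} 1/(1+d(u))=K$, which is the claimed formula.

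There is no real obstacle here; the only thing to be slightly careful about is the convention that $G$ has no self-loops, so that $d(u)=m_k-1$ (and hence $1+d(u)=m_k$) rather than $m_k$. Under that standard convention, the proof is essentially the two-line clique-counting argument above.
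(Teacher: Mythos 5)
Your proof is correct and follows essentially the same route as the paper's: both observe that $G$ is a disjoint union of cliques (one per cluster) and that each clique of size $m$ contributes $m\cdot(1/m)=1$ to the sum. Your version is just slightly more explicit about the equivalence-relation structure and the no-self-loop convention.
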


\vspace{-9pt}
\begin{proof}[Proof of Lemma~\ref{lemma:lemma1}]
    It is easy to see that the $G$ is a collection of cliques, with the clique containing $u$ of size $1+d(u)$. The total contribution of the vertices in this clique is $(1+d(u))\times(1/(1+d(u))) = 1$, i.e., each clique contributes a total of 1 to Eq.~\ref{eq:cc}, adding up to the total number of cliques or connected components in $G$. See Fig.~\ref{fig:degrees} for an example.
\end{proof}


\begin{figure*}[t]
    \centering
    \includegraphics[width=1.0\linewidth]{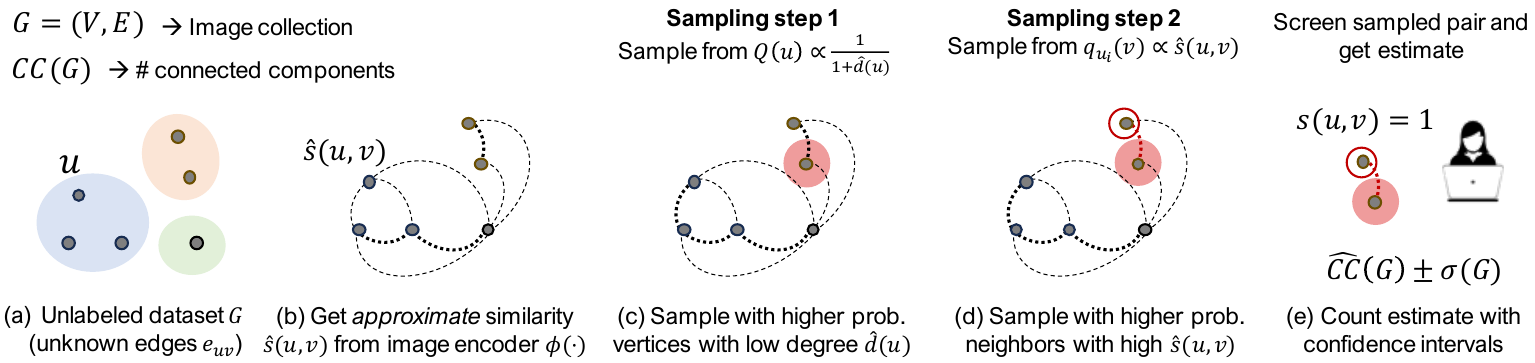}\\
    \vspace{-5pt}
    \caption{\textbf{Proposed Framework for Counting Clusters in a Dataset.} \textbf{(a)} We re-present dataset as a graph $G$ and estimate the number of connected components for an (unknown) pairwise similarity. \textbf{(b)} First, we compute an \emph{approximate} similarity between images using an embedding. \textbf{(c)} We sample vertices $u_i$ from the distribution $Q(u)$ which biases the samples towards vertices with low (estimated) degrees. \textbf{(d)} We then sample nodes of $v_{i,j}$ from $q_{u_i}(v)$ biased towards neighbors. 
    \textbf{(e)} Human feedback on the sampled pairs is used to estimate the number of clusters with confidence intervals. 
    }
    \vspace{-9pt}
    \label{fig:main}
\end{figure*}

\vspace{-10pt}
\subsection{Estimation via Nested Sampling}\label{sec:nestedsampling}
The above lemma provides a way to estimate the number of connected components $\CC(G)$ by sampling.
\vspace{-5pt}
\begin{equation}
    \CC(G) = \sum_{u=1}^n \frac{1}{1+d(u)} = n\mathbb{E}_{u\sim \text{Unif}(1,n)} \left[ \frac{1}{1+d(u)} \right].
\end{equation}
Thus, one approach to estimate the number of connected components is to sample $N$ vertices and estimate their degrees by querying a human if an edge exists between the vertex and all other $n-1$ vertices in the graph. This would require $N \times (n-1)$ queries, saving over 
$n \times (n-1)/2$ 
queries for exact estimation. However, one can estimate the degree of a vertex by sampling as well. This suggests a two-step Monte Carlo (MC) approach for estimation. First, we sample $N$ vertices $u_1, \ldots, u_N$. For each sampled vertex $u_i$, we then sample $M$ nodes $v_{i,1}, \ldots, v_{i,M}$ uniformly at random, query a human to obtain $s(u_i,v_{i,j})$ for each potential neighbor, and estimate the degree of the vertex $u_i$ as:
\vspace{-5pt}
\begin{equation} \label{eq:degree}
\hat{d}(u_i) = \frac{n-1}{M} \sum_{j =1}^M s(u_i,v_{i,j}).
\end{equation}
\vspace{-5pt}
From this, we can estimate the number of connected components as:
\vspace{-3pt}
\begin{equation}\label{eq:mc}    
    \widehat{\CC}_\text{MC}(G) = \frac{n}{N} \sum_{i=1}^N \frac{1}{1+\hat{d}(u_i)}.
\end{equation}
\vspace{-5pt}

This nested Monte Carlo estimate is asymptotically unbiased for the expected number of connected components, i.e., it converges to the true mean under mild assumptions.  
One can also construct a sample estimate of the variance and confidence intervals around the mean (we will derive the expressions for a general sampling distribution in the next section). The number of queries required to construct the estimate scales as $N \times M$, which is more efficient than the earlier approach of $N \times n$. However, the variance of the estimator can be high for graphs where the degrees of the vertices vary significantly.

\vspace{-5pt}

\begin{remark}
    Our problem is related to work on estimating the number of connected components in a graph in sub-linear time. The randomized algorithm proposed in~\cite{chazelle2005approximating,berenbrink2014estimating} use a similar sampling argument but their cost model is different. They assume the edges in the graph are provided as an adjacency list and run breadth-first-search starting from each node for a number of steps till the size of the connected components exceeds a threshold---a threshold of $1/\epsilon$ gives an $n\epsilon$ additive approximation to CC.
    In our setting the true edges are unknown---we instead have a noisy pairwise similarity---and have to pay a cost to reveal an edge, and hence the same approach is not applicable.
\end{remark}

\subsection{Estimation via Nested Importance Sampling}\label{sec:nestedis}

\vspace{-3pt}
Importance sampling~\cite{mcbook} uses a proposal distribution $q$ and replaces the expectation of a quantity $f(x)$ under $p$ as: 
\vspace{-5pt}

\begin{equation}
\E_p \left[f(x)\right] =\E_q \left[\frac{p(x)}{q(x)}f(x)\right].
\end{equation}
The equality holds for any proposal distribution $q$ that satisfies $p(x)f(x) > 0 \implies q(x) > 0$. Moreover, one can show that the optimal proposal distribution $q(x) \propto p(x) f(x)$ for which the estimator has zero variance.

\setlength{\textfloatsep}{10pt}
\begin{algorithm}[t]
\caption{Estimating cluster count $k$ using NIS}\label{alg:cap}
\begin{algorithmic}[1]
\State $N \gets$ Number of sampled vertices
\State $M \gets$ Number of sampled edges per vertex
\State $\hat{s}(u,v) \gets$ Approximate pairwise similarity
\For{i = 1,...,N}
    \State Sample $u_i \sim Q(u)$
    \State Sample $v_{i,1},...,v_{i,M} \sim q_{u_i}(v)$
    \State $\hat{d}(u_i) \gets \frac{1}{M} \sum_{j=1}^M \frac{s(u_i,v_{i,j})}{q_{u_i}(v_{i,j})}$ \ \ \ {\scriptsize// Human feedback}\Comment{Eq. (\ref{eq:Nbar})}
\EndFor
\State $\widehat{\CC}_\NIS \gets \frac{1}{N} \sum_{i=1}^N \left( \frac{1}{Q(u_i)} \times \frac{1}{1+\hat{d}(u_i)} \right)$  \Comment{Eq. (\ref{eq:Fnis})}
\end{algorithmic}
\label{alg:nis}
\end{algorithm}

We are interested in computing various expectations under a uniform distribution $p$, and one can show that the optimal proposal distribution to sample nodes for estimating $\CC(G)$ in Eq.~\ref{eq:cc} is $Q(u) \propto 1/(1+d(u))$, while that to sample edges for estimating the degree $\hat{d}(u)$ in Eq.~\ref{eq:degree} is $q_u(v) \propto s(u,v)$.

However, sampling from these distributions requires knowledge of the true $s(u,v)$ which is unknown. But we can construct proposals using the approximate pairwise similarity $\hat{s}(u,v) \in [0,1]$. This can be computed using 
the similarity between a pair of images estimated from their feature embeddings. 
Then, we 
can set $Q(u) \propto 1/(1+\sum_{v \neq u} \hat{s}(u,v))$ as the proposal distribution to sample a set of vertices $I$ of size $N$. For each sampled vertex $u_i$, we sample a set of $M$ nodes $v_{i,j}$ from the distribution $q_u(v) \propto \hat{s}(u,v)$. The $\widehat{\CC}_\NIS(G)$ obtained using importance sampling is:
\vspace{-5pt}
\begin{equation}\label{eq:Fnis}
    \widehat{\CC}_\NIS(G) = \frac{1}{N} \sum_{i=1}^N \left(\frac{1}{Q(u_i)} \times \frac{1}{1+\hat{d}(u_i)} \right), \  u_i \sim Q(u),
\end{equation}
\vspace{-5pt}
where, $\hat{d}(u_i)$ is estimated as:
\begin{equation}\label{eq:Nbar}
\hat{d}(u_i) = \frac{1}{M} \sum_{ j =1}^M \frac{s(u_i, v_{i,j})}{q_{u_i}(v_{i,j})}, \quad  v_{i,1},...,v_{i,M} \sim q_{u_i}(v). 
\end{equation}

The proposal $Q(u)$ biases the distribution towards isolated vertices, i.e., ones that have low degrees, as they contribute the most toward the number of clusters. The second biases the distribution towards sampling nodes that are likely to be connected, i.e., to have high $\hat{s}(u,v)$. Note the only place human feedback is required is in computing $\hat{d}(u)$ in Eq.~\ref{eq:Nbar} consisting of $M$ queries. Hence the overall query complexity of the approach is $N \times M$, similar to the simple Monte Carlo estimator. But if the similarity function $\hat{s}(u,v)$ is a good approximation to the true similarity then we expect the estimator to have a lower variance. Algorithm~\ref{alg:nis} and Fig.~\ref{fig:main} describe the overall scheme.

\vspace{-5pt}
\subsection{Variance and Confidence Intervals}\label{sec:variance} 

\begin{theorem}
\label{theorem:variance}
Assume $Q(u) > 0$ for all $u$ and $q_u(v) > 0$ for all $u, v$. Let $\widehat{\CC}_{N,M}$ denote the estimator using $N$ sampled vertices and $M$ sampled edges per vertex. For any $M > 0$, the estimator $\widehat\CC_{N,M}$ is asymptotically normal, i.e.,
\begin{equation}
\sqrt{N} (\widehat\CC_{N,M} - \mu_{M}) \stackrel{D}{\rightarrow} \mathcal N(0, \sigma^2_{1,M}) \text{ as $N \to \infty$},
\end{equation}
where $\mu_{M} = \E[\widehat{\CC}_{N,M}] = \E[\widehat{\CC}_{1,M}]$ and $\sigma^{2}_{1,M} = \Var(\widehat\CC_{1,M})$. Furthermore, the bias $|\mu_M - \CC|$ is $O(1/M)$. Together these facts imply that the estimator is consistent as both $N$ and $M$ go to infinity, i.e., $\lim_{N \to \infty, M \to \infty} \widehat \CC_{N,M} = \CC$.
\end{theorem}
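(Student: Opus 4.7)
The plan is to view $\widehat\CC_{N,M}$ as a sample mean of $N$ i.i.d.\ copies of the random variable $Z = 1/\bigl(Q(u)(1+\hat d(u))\bigr)$, where $u \sim Q$ and $\hat d(u)$ is the inner $M$-sample importance estimator of the degree. Independence across the outer index is built into the algorithm, so $Z_1, \ldots, Z_N$ are i.i.d.\ with mean $\mu_M$ and variance $\sigma^2_{1,M}$. Since the vertex set is finite and $Q$, $q_u$ are strictly positive, $1/Q(u) \le 1/\min_v Q(v) =: c_1$, and $\hat d(u) \ge 0$ forces $1/(1+\hat d(u)) \le 1$; hence $Z \le c_1$ is bounded and $\sigma^2_{1,M} \le c_1^2$ uniformly in $M$. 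Asymptotic normality in $N$ for fixed $M$ then follows immediately from the classical Lindeberg--L\'evy CLT applied to the $Z_i$.

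The main work is bounding $|\mu_M - \CC|$. The inner estimator is conditionally unbiased, $\E[\hat d(u) \mid u] = \sum_v q_u(v)\, s(u,v)/q_u(v) = d(u)$, so the bias arises only from the nonlinearity of $g(x) = 1/(1+x)$. Since $g^{(k)}(x) = (-1)^k k!/(1+x)^{k+1}$ is bounded on $[0,\infty)$, Taylor expansion around $d(u)$ gives
\begin{equation*}
g(\hat d) = g(d) + g'(d)(\hat d - d) + \tfrac{1}{2} g''(d)(\hat d - d)^2 + R, \qquad |R| \le C|\hat d - d|^3.
\end{equation*}
The linear term vanishes after taking conditional expectation. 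Because $s/q_u$ is bounded (finite state space, $q_u > 0$), $\hat d(u)$ is an average of $M$ bounded i.i.d.\ terms, so $\E[(\hat d - d)^2 \mid u] = O(1/M)$ and $\E[|\hat d - d|^3 \mid u] = O(1/M^{3/2})$. Therefore $|\E[g(\hat d) - g(d) \mid u]| = O(1/M)$ uniformly in $u$. Multiplying by the bounded factor $1/Q(u)$, averaging over $u \sim Q$, and invoking Lemma~\ref{lemma:lemma1} on the leading term yields $\mu_M = \CC + O(1/M)$.

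For consistency, combine the previous two steps via the triangle inequality $|\widehat\CC_{N,M} - \CC| \le |\widehat\CC_{N,M} - \mu_M| + |\mu_M - \CC|$. Chebyshev's inequality with the uniform variance bound $\sigma^2_{1,M} \le c_1^2$ controls the first term by $c_1^2/(N\epsilon^2)$, while the second term is $O(1/M)$, so $\widehat\CC_{N,M} \to \CC$ in probability as $N,M \to \infty$ jointly. The main delicate step is the Taylor-remainder control in the bias argument, which is what ties the rate to the inner sample size $M$; the remaining ingredients (CLT, unbiasedness of the importance degree estimator, and Chebyshev) are routine once we have observed that the importance weights and the function $g$ and its derivatives are uniformly bounded.
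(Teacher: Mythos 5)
Your proof is correct, and the first and last parts (CLT for the outer average of i.i.d.\ bounded copies of $\frac{1}{Q(u)}\cdot\frac{1}{1+\hat d(u)}$, then triangle inequality plus Chebyshev for joint consistency) match the paper's argument essentially line for line. Where you diverge is the bias bound: the paper merely remarks that a delta-method argument would work and instead cites the nested Monte Carlo result of Rainforth et al., which gives a mean-squared error of $O(1/N + 1/M^2)$ and hence, since MSE equals variance plus squared bias and the bias depends only on $M$, a bias of $O(1/M)$. You actually carry out the delta-method route: conditional unbiasedness of the inner importance-sampling degree estimator kills the linear term in the Taylor expansion of $g(x)=1/(1+x)$ around $d(u)$, the uniformly bounded derivatives of $g$ on $[0,\infty)$ (valid because $\hat d \ge 0$) control the quadratic and cubic terms, and the $O(1/M)$ and $O(1/M^{3/2})$ central-moment bounds for an average of $M$ bounded i.i.d.\ importance weights finish the job. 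Your version is self-contained and makes the mechanism of the bias explicit (it comes purely from the nonlinearity of $g$), at the cost of a page of routine moment bookkeeping; the paper's citation is shorter and additionally delivers the full MSE rate, which is what underlies the sample-allocation remark ($N \propto M^2$) that follows the theorem. One cosmetic note: a second-order Lagrange form $g(\hat d) = g(d) + g'(d)(\hat d - d) + \tfrac12 g''(\xi)(\hat d - d)^2$ with $|g''|\le 2$ already gives $|\E[g(\hat d)-g(d)\mid u]| \le \E[(\hat d - d)^2 \mid u] = O(1/M)$, so the explicit cubic remainder is not needed.
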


Asymptotic normality justifies the construction of confidence intervals as follows: let $\hat \sigma^2_{1,M}$ be the sample variance of $\frac{1}{Q(u_i)} \times \frac{1}{1+\hat d(u_i)}$ and use the 95\% confidence interval  $\widehat \CC_{N,M} \pm z_{0.025} \cdot \hat\sigma_{1,M}/\sqrt{N}$.

\begin{proof}[Proof of Theorem~\ref{theorem:variance}]
The estimator $\widehat \CC_{N,M}$ is the sample average of $N$ identically distributed copies of
\vspace{-5pt}
\begin{equation}
\widehat \CC_{1,M} = \frac{1}{Q(u_1)} \times \frac{1}{1 + \hat d(u_1)}.
\end{equation}
Therefore $\E[\widehat{\CC}_{N,M}] = \E[\widehat{\CC}_{1,M}]$ and the asymptotic normality result holds by the central limit theorem as long as $\mu_{M}$ and $\sigma^2_{1,M}$ are finite. The quantity $\frac{1}{Q(u_1)} \times \frac{1}{1 + \hat d(u_1)}$ is bounded above by our assumption that $Q(u) > 0$ and $q_u(v) > 0$. Together with the fact that the support of the joint sample space $(u_1, v_{1,1}, \ldots, v_{1,M})$ is finite, implies $\mu_M$ and $\sigma^2_{1,M}$ are finite. It remains to show that the bias is $O(1/M)$. This can be proved by the delta method, but also follows from a result of \cite{rainforth2018nesting}, which applies to our setting to assert that the mean-squared error of $\widehat{\CC}_{N,M}$ is $O(\frac{1}{N} + \frac{1}{M^2})$. Because mean-squared error equals variance plus squared bias, this necessitates that the bias, which only depends on $M$, is $O(\frac{1}{M})$.
\end{proof}

\vspace{-5pt}
\begin{remark}
\cite{rainforth2018nesting} describe the optimal allocation of samples when $T=NM \to \infty$ and found that it occurs when $N$ grow proportionally to $M^2$ in our setting. However, in applications with finite $T$, the lowest error occurred for $N \propto T^\alpha$ with $\alpha$ between 0.5 and 0.6, which suggests selecting $N$ to be approximately proportional to $M$ may be better in practice (See \S~\ref{sec:parameters}--Parameters for NIS).
\end{remark}

\vspace{-15pt}
\section{Experiments}\label{sec:evaluation}
\vspace{-5pt}
This section describes the experimental setup, datasets, models used for computing image similarity, baselines, and the evaluation metrics.

\vspace{-5pt}
\paragraph{Datasets} 
We evaluate our approach on seven animal re-identification datasets, where the goal is to estimate the number of individuals (see Tab.~\ref{table:datasets}) ---
\textbf{Chimpanzee Faces in the Wild} (CTai and CZoo)~\cite{CTai} CTai contains 5,078 images of 72 individuals living in the Ta\"i National Park in C\^ote d’Ivoire, while CZoo consists of 2,109 recordings of 24 chimpanzees.  
\textbf{IPanda50}~\cite{IPanda50} contains 6,874 images of 50 giant panda individuals,  
\textbf{OpenCows2020}~\cite{OpenCows2020} comprises 4,736 images of 46 Holstein-Friesian cattle individuals,  
\textbf{MacaqueFaces}~\cite{MacaqueFaces} includes 6,280 face images of 34 rhesus macaque individuals, 
\textbf{WhaleSharkID}~\cite{WhaleSharkID} features 7693 images with 543 individual whale shark identifications, 
and \textbf{GiraffeZebraID}~\cite{GiraffeZebraID} with 6925 images of 2056 zebra and giraffe individuals in Kenya.

\vspace{-5pt}
\paragraph{Image Encoding and Similarity} We use image embeddings from the MegaDescriptor~\cite{megadesc}, a Swin-transformer model optimized with ArcFace loss, that beats CLIP~\cite{clip}, DINOv2~\cite{dino}, and ImageNet-1k~\cite{imagenet} pre-trained image encoders on animal Re-ID tasks. Specifically, we consider MegaDescriptor-L-384 for our experiments and present ablation tests with its smaller version MegaDescriptor-B-224 in \S~\ref{sec:results}. 
The normalized cosine similarity between a pair of embeddings $\Phi(x_u)$ and $\Phi(x_v)$ as used as the proposal distribution with $\tau=0.5$ (chosen with CZoo):

\vspace{-3pt}
\begin{equation}
\hat{s}(u,v)=\frac{e^{\hat{c}(u,v)/\tau}}{\sum_ve^{\hat{c}(u,v)/\tau}} \ \ \ \ \text{where} \ \ \ \hat{c}(u,v) = \frac{\Phi(x_u)\cdot\Phi(x_v)}{||\Phi(x_u)||_2||\Phi(x_v)||_2}.
\end{equation}
\vspace{-7pt}

\begin{table}[t]
\caption{\textbf{Population Size Estimation on Animal Re-ID Datasets}. The number of images $|\mathcal{D}|$ and individuals $|\mathcal{Y}|$ per dataset along with the estimated population $k$ for a given amount of human effort (sampled pairs). Estimates using connected components (CoCo), Robust Continuous Clustering~\cite{rcc} (RCC), $k$-means ($k$m), and mean-shift (not shown, but see Fig.~\ref{fig:results_all_datasets}) exhibit high error rates. $k$-means improves with the addition of human feedback (pc$k$m) but the error rates remain high. Farthest-first traversal (FFT) outperforms these methods. The proposed Nested-IS (NIS) surpasses these baselines and yields estimates and confidence intervals that often contain the true value. Error rates (\%) on each dataset (shown for NMC and NIS) are also lower compared to the baseline and Nested Monte Carlo (NMC). Our method demonstrates significant improvement on the challenging WhaleSharkID~\cite{WhaleSharkID} and GirrafeZebraID~\cite{GiraffeZebraID}.}
\label{table:datasets}
\vspace{-5pt}
\centering
\resizebox{\columnwidth}{!}{
\begin{tabular}{lcc|c|c|c|c|c|r|r}
\toprule
 & & &Sampled & \multicolumn{6}{|c}{Estimated $k$ @ $N \times M$ sampled pairs}\\
 & & &pairs & \multicolumn{4}{c}{ } &  \multicolumn{2}{|c}{CC$\pm$CI (\%error)} \\
Dataset & $|\mathcal{D}|$ & $|\mathcal{Y}|$ & ($\frac{N \times M}{|E|}$)& CoCo & RCC & $k$m$\rightarrow$pc$k$m& FFT & NMC & NIS \\
\hline
 CTai~\cite{CTai} & 5078 & 72 & 0.014\% & 7 & 258 & 90 $\rightarrow$ 87 &   54 & 26$\pm$4 (64\%) & \textbf{63}$\pm$\textbf{20 (16\%)}\\
 CZoo~\cite{CTai} & 2109 & 24 & 0.004\% & 3 & 57 & 50 $\rightarrow$ 49 &   13 & 08$\pm$1 (68\%) & \textbf{23}$\pm$\textbf{07 (17\%)} \\
 IPanda50~\cite{IPanda50} & 6874 & 50 & 0.002\% & 7 & 251 & 80 $\rightarrow$ 79 &  34 & 17$\pm$3 (66\%) & \textbf{50}$\pm$\textbf{16 (13\%)} \\
 OpenCows2020~\cite{OpenCows2020} & 4736 & 46 & 0.006\% & 39 & 220 & 70 $\rightarrow$ 69 &  39  & 18$\pm$3 (62\%) & \textbf{40}$\pm$\textbf{13 (17\%)} \\
 MacaqueFaces~\cite{MacaqueFaces} & 6280 & 34 & 0.002\% & 13 & 90 & 80 $\rightarrow$ 77 &  31  & 15$\pm$2 (55\%) & \textbf{34}$\pm$\textbf{08 (09\%)} \\
 WhaleSharkID~\cite{WhaleSharkID} & 7693 & 543 & 0.16\% & 630 & 1182 & 70 $\rightarrow$ 132 &   251  & 145$\pm$9 (73\%) & \textbf{543}$\pm$m\textbf{74 (06\%)}\\ 
 GiraffeZebraID~\cite{GiraffeZebraID} & 6925 & 2056 & 0.16\% & 4714 & 500 & 100 $\rightarrow$ 155 &   271  & 403$\pm$34 (80\%) & \textbf{1951}$\pm$\textbf{370 (08\%)}\\ 
 \hline
 \multicolumn{4}{l|}{Average error (\%)} & 69.4\% &219.2\%  & 80.4\% $\rightarrow$ 75.4\% &  38.2\%  & 66.9\% & \textbf{12.3\%}\\ 
\bottomrule
\end{tabular}
}
\end{table}

\vspace{-5pt}
\paragraph{Clustering Baselines}\label{sec:baselines} 
We employ four baselines for estimating the number of clusters. The first is mean-shift clustering~\cite{meanshift}. We set the bandwidth parameter as the average distance between samples and their nearest neighbor, and estimate the number of clusters as the number of modes. The second method is $k$-means~\cite{kmeans}. To calculate the \emph{optimal} $k$, we calculate the within-cluster sum of squares $\mathcal{J}_{\km} = \sum_{i\in G} ||{c_i}-u_i||^2_2$, where $c_i$ is the cluster center corresponding to $u_i$. The elbow is identified as the value of $k$ where the slope becomes approximately constant. Third, we use connected components (CoCo) after thresholding the similarity values $\hat{s}(u,v)$ of our graph $G$. To find the optimal threshold, we use the same procedure as the $k$-means elbow; that is, we calculate $\mathcal{J}_{\km}$ for different thresholds and return the threshold at the elbow.
Lastly, we use robust continuous clustering (RCC) to extract the number of clusters after optimizing the algorithm's objective using the default hyperparameters in~\cite{rcc}.

\vspace{-5pt}
\paragraph{Active Clustering}\label{sec:activeclustering}
We use active variants of the above approaches. 
First, we use pairwise constrained $k$-means (pc$k$-means~\cite{basu2004}). For a given $k$, the objective is:
\vspace{-10pt}
\begin{multline*}
\mathcal{J}_{\pckm} = \sum_{i \in G} ||c_i-u_i||^2_2 + 
\sum_{(i,j)\in \M} \alpha \mathbbm{1}[l_{i}\neq l_j] +
\sum_{(i,j)\in \C} \beta \mathbbm{1}[l_{i} = l_j],
\end{multline*}

\vspace{-3pt}
where $\M$ and $\C$ are the sets of \emph{must-link} and \emph{cannot-link} constraints, respectively, and $l_i$ denotes the cluster index for $i$. Scalars $\alpha,\beta > 0$ trade off the $k$-means objective with the cost of violating constraints. We select empirically $\alpha=\beta=1$. 
Given a set of constraints, we can pick the optimal $k$ using the elbow heuristic on the $\mathcal{J}_{\pckm}$ objective. We pick samples based on a farthest-first traversal scheme described below.
The second method is constrained mean shift~\cite{cms2022schier} (Constrained-MS) that introduces a density-based integration of the constraints.
Lastly, instead of incorporating constraints within $k$-means one can directly estimate the number of clusters using a farthest-first traversal (FFT) of points and exhaustive comparison with existing individuals. The approach is as follows:
1) Initialize the list of sampled images $S$ as empty and list of discovered individuals as $I$ as empty,
2) Selecting an unsampled point that is farthest from $S$, i.e., $u=\text{argmax}_u\min_{v \in S}~d(u,v)$, and add it to the sampled set $S$.
3) Compare $u$ to all the previously discovered individuals in $I$. If it matches an individual add it to the corresponding list, else start a new list with $u$ and add it to $I$.
FFT rapidly explore the dataset to find at least one member for each cluster. At each iteration it pays a cost equal to the number of discovered individuals.

\begin{figure}[t]
    \centering
    \begin{tabular}{ccccc}
    \vspace{-10pt}
       \multirow{-4}{*}[1pt]{\includegraphics[width=0.030\linewidth]{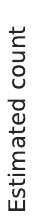}}  &
       \hspace{30pt}\includegraphics[width=0.26\linewidth]{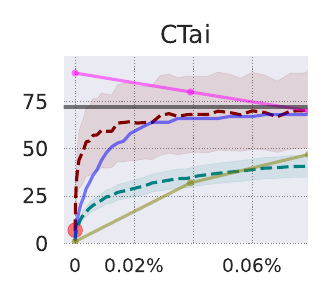}  &  
       \hspace{0pt}\includegraphics[width=0.26\linewidth]{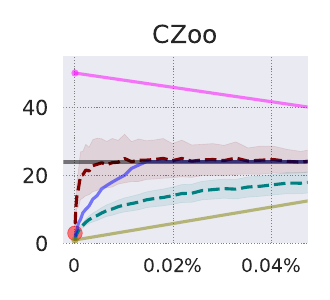} &
       \hspace{0pt}\includegraphics[width=0.26\linewidth]{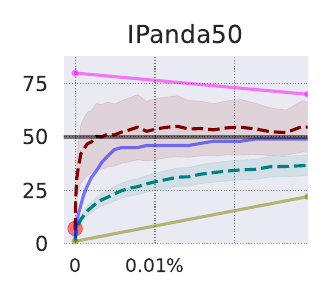} & \\ 
        &
       \hspace{-30pt}\includegraphics[width=0.26\linewidth]{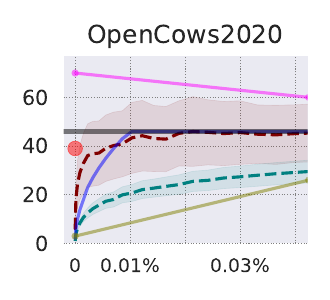}  &  
       \hspace{-80pt}\includegraphics[width=0.26\linewidth]{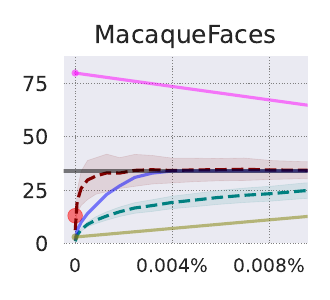} &
       \hspace{-100pt}\includegraphics[width=0.26\linewidth]{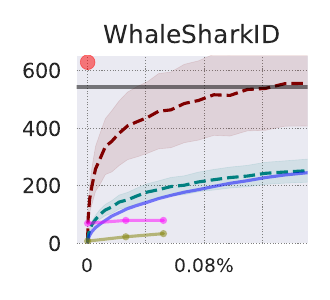} &
       \hspace{-60pt}\includegraphics[width=0.26\linewidth]{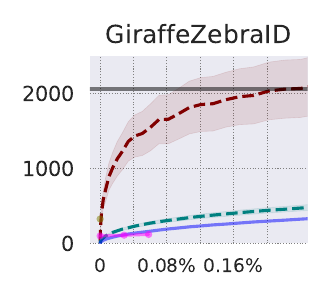} \\
       \multicolumn{5}{c}{\scriptsize \text{\# sampled pairs / \# edges}}\\ 
       \multicolumn{5}{c}{\includegraphics[width=0.6\linewidth]{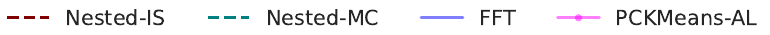}} \\
       \multicolumn{5}{c}{\includegraphics[width=0.4\linewidth]{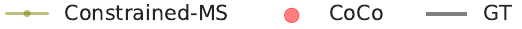}}
    \end{tabular}\\
    \vspace{-10pt}
    \caption{
    \textbf{Performance of Estimating $k$ per Human Effort on Animal Re-ID Datasets.} We use the cosine similarity built from the MegaDescriptor-L-384 image embeddings--See \S~\ref{sec:evaluation}. The human effort is measured as the fraction of the sampled pairs and total pairs $|E|$ in the dataset $G$. 
    Our method estimates the true count with less human effort compared to baselines.
    Dashed lines indicate the mean estimates and shaded regions indicate the mean 95\% confidence interval across 100 trials.
    }
    \vspace{0pt}
    \label{fig:results_all_datasets}
\end{figure}

\vspace{-9pt}

\paragraph{Evaluation Metric and Human Effort}
Error is measured as $|\CC - \widehat{\CC}|/\CC$, the fractional absolute difference between the estimated $\widehat{\CC}$ and the true number of clusters $\CC$. Human effort is measured as the number of pairwise queries used for estimation. To account for variable dataset sizes, we report the number of sampled pairs normalized by the total number of edges $|E|$ in $G$. 

\vspace{-5pt}
\section{Results}\label{sec:results}
\vspace{-5pt}

We compare our method to the baselines described in \S~\ref{sec:baselines} that include connected components (CoCo), pairwise constrained $k$-means (pc$k$-means), constrained mean shift (Constrained-MS), our farthest-first traversal baseline (FFT), and nested Monte Carlo sampling (Nested-MC) from Eq. (\ref{eq:mc}).

\begin{figure}[t]
    \centering
    \begin{tabular}{ccc}
      \hspace{-8pt}\includegraphics[width=0.35\linewidth]{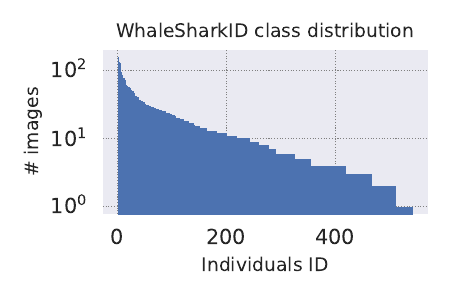}  &   
      \hspace{-10pt}\includegraphics[width=0.35\linewidth]{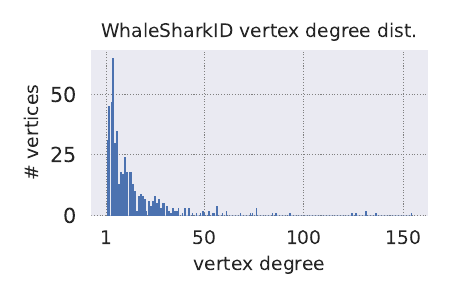} &
      \multirow{-7.5}{*}[1pt]{\hspace{0pt}\includegraphics[width=0.33\linewidth]{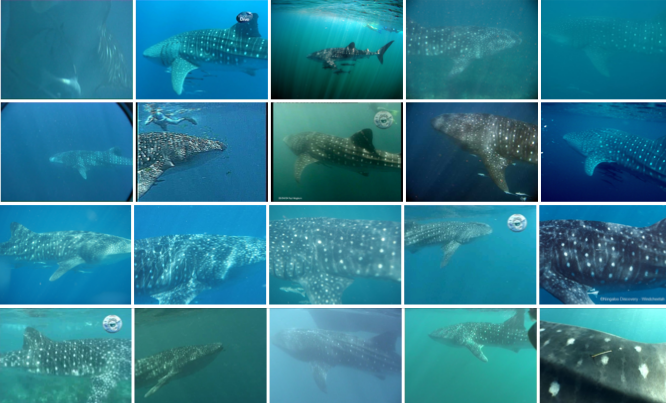}}\\
    \end{tabular}\\
    \vspace{-10pt}
    \caption{\textbf{WhaleSharkID Dataset Statistics.} (Left) The dataset is long-tailed with many individuals with a few images. (Center) Histogram of vertex degree distribution--most individuals have less than 5 images. (Right) Sample images from the dataset. 
    }
    \vspace{-10pt}
    \label{fig:whaleshark}
\end{figure}

\vspace{-7pt}
\paragraph{Baselines Perform Poorly} 
Techniques like $k$-means and CoCo perform poorly, with about 80\% and 69\% error on average, respectively, despite using state-of-the-art image embeddings to compute similarity (see Fig.~\ref{fig:results_all_datasets} and Tab.~\ref{table:datasets}). For instance, $k$-means estimates $70$ clusters for WhaleSharkID instead of $543$, $70$ clusters instead of $34$ for OpenCows2020, and $50$ clusters instead of $24$ for CZoo. Estimates using CoCo are slightly better on average but tend to be an underestimate. One reason is that choosing hyperparameters for clustering is particularly challenging. For instance, the WCSS curve as a function of $k$ looks rather smooth, and there is no clear ``elbow.'' Another reason is that the underlying similarity is imperfect. 
Improving this through deep metric learning approaches would require significant resources and expertise to set various parameters and may not pan out when supervision is limited~\cite{musgrave2020}.

\vspace{-7pt}
\paragraph{Nested-IS outperforms Nested-MC} 
Both Nested-MC and Nested-IS improve over the baselines as more human feedback is collected. As described in \S~\ref{sec:nestedsampling}, Nested-MC samples edges uniformly at random, while Nested-IS is driven by the pairwise similarity and leads to an error reduction at a much faster rate as seen in Fig.~\ref{fig:results_all_datasets}.  
Specifically, we get a relative error reduction of 82\%  
over Nested-MC with the same human effort. 

\begin{figure}[t]
    \centering
    \begin{tabular}{cc}
       \includegraphics[width=0.403\linewidth]{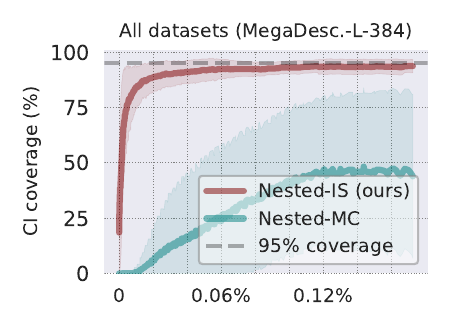}  &    
       \hspace{-14pt}\includegraphics[width=0.40\linewidth]{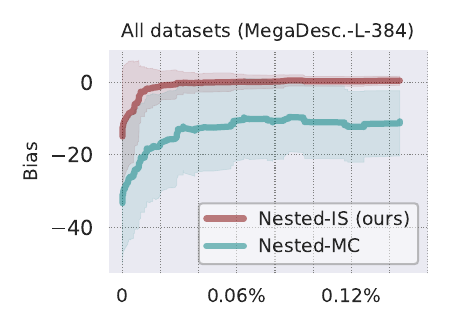}\\
    \end{tabular}\\
    \vspace{-5pt}
    \scriptsize \text{\# sampled pairs / \# edges}\\
    \vspace{-5pt}
    \caption{\textbf{Confidence Intervals' Coverage and Bias} using MegaDescriptor-L-384 feature embeddings on all datasets. 
    (Left) Our method produces a CI coverage close to $95\%$ (when using $z_{0.025} = 1.96$ for a $95\%$ confidence interval--\S~\ref{sec:variance}) with less than $0.04\%$ of sampled pairs.
    (Right) Even though our estimator has a negative bias for a low number of sampled pairs, it rapidly reaches zero compared to Nested-MC.
    }
    \vspace{0pt}
    \label{fig:ci_coverage_bias}
\end{figure}

\vspace{-7pt}
\paragraph{Nested-IS outperforms Active Clustering} 
We compare our method to pairwise constrained $k$-means (pc$k$-means-AL), constrained mean shift (Constrained-MS), and the proposed farthest-first traversal approach (FFT) with similar human effort. Fig.~\ref{fig:results_all_datasets} shows that 
Nested-IS handily outperforms active clustering. 
In WhaleSharkID, despite performing best, our method still requires querying relatively many pairs to get accurate estimates, which can be explained by the difficulty of the task.
In Fig.~\ref{fig:whaleshark}--\emph{left-center} we show the histogram of individuals with a clear long-tailed distribution and the histogram of vertex degrees where we can see that in WhaleSharkID most individuals have less than 5 images. The performance of the MegaDescriptor is also the lowest (around 62\%) on the Re-ID task on this dataset. However, the performance savings over alternative approaches is significant.

\begin{figure}[t]
    \centering
    \begin{tabular}{ccc}
      \includegraphics[width=0.35\linewidth]{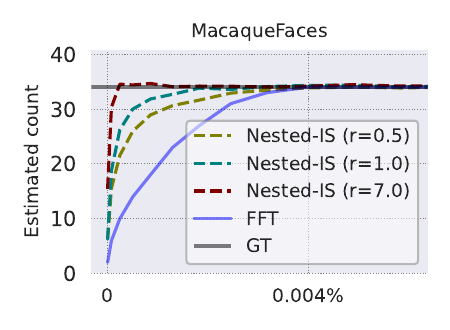}  &    
      \includegraphics[width=0.28\linewidth]{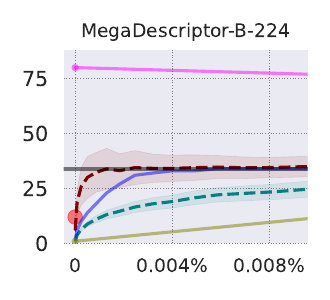} &
      \hspace{-10pt}\includegraphics[width=0.28\linewidth]{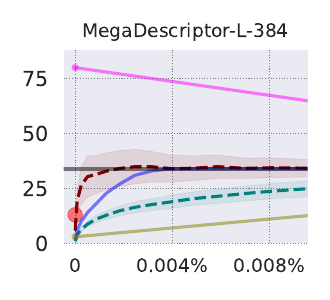}\\
      \multicolumn{1}{c}{\scriptsize \text{\# sampled pairs / \# edges}} &
       \multicolumn{2}{c}{\includegraphics[width=0.55\linewidth]{figures/eccv_fig_legend_top.pdf}} \\
      & \multicolumn{2}{c}{\includegraphics[width=0.4\linewidth]{figures/eccv_fig_legend_bottom_.pdf}}
    \end{tabular}\\

    \vspace{-10pt}
    \caption{ 
    \textbf{Ablation Experiments on MacaqueFaces} (Left) Our proposed method (Nested-IS) with different sampling ratios $r=M/N$. Increasing the number of sampled neighbors $M$ per sampled vertices $N$ (\S~\ref{sec:nestedis}) improves performance for these datasets. (Center-Right) Performance comparison between MegaDescriptor-B-224 and MegaDescriptor-L384. Although our method is slightly better using superior feature embeddings (L-384), it still outperforms all baselines with B-224. 
    }
    \vspace{0pt}
    \label{fig:ablation}
\end{figure}

\vspace{-7pt}
\paragraph{Confidence Intervals are Calibrated}  
In addition to lower error rates, a key advantage of our approach is that it also provides confidence intervals (CIs) (See Fig.~\ref{fig:results_all_datasets}).
To estimate the quality of the CI estimation we calculate the empirical coverage of the CI over 100 runs (i.e., the percentage of times the estimated CIs contains the true count). Our method produces close to $95\%$ coverage using $z_{0.025} = 1.96$ for a $95\%$ CI, as described in \S~\ref{sec:variance}, when around $0.02\%$ of pairs are sampled across datasets, as shown in Fig.~\ref{fig:ci_coverage_bias}--\emph{left}.

\vspace{-9pt}
\paragraph{Estimation Bias is Low} We calculate the empirical bias, denoted as $\textsc{Bias} = \E[\widehat{\CC}_\NIS(G)] - \CC(G)$ where $\E[\widehat{\CC}_\NIS(G)]$ is the mean count estimate over 100 runs. Fig.~\ref{fig:ci_coverage_bias}--\emph{right} shows that the bias is negative initially, but it rapidly drops to 0. The initial negative bias might explain the lower coverage of the CIs.

\vspace{-7pt}
\paragraph{Parameters for Nested-IS}\label{sec:parameters} 
There is a trade-off between the number of sampled vertices $N$ and sampled neighbors per vertex $M$. 
Increasing $N$ will reduce the variance of the estimation of the number of clusters, and increasing $M$ will reduce the variance of each vertex degree estimate. In Fig.~\ref{fig:ablation}--\emph{left} we show that increasing the the ratio between the number of sampled neighbors per vertex $M$ relative to the number of sampled vertices $N$ (i.e., $r=M/N$) produces more accurate estimations with less human effort. For instance, using $r=7$ achieves close-to-zero error with $1/5$ of the sampled pairs compared to using $r=0.5$.

\vspace{-7pt}
\paragraph{Ablation with other MegaDescriptors} We test the performance of all baselines using a smaller less-performing version of the MegaDescriptor (MegaDescriptor-B-224). In Fig.~\ref{fig:ablation}--\emph{center-right} we show that our method performs similarly with both MegaDescriptor versions.

\vspace{-7pt}
\paragraph{How Much do we Know about the Clustering?}
Sometimes we also wish to recover the clustering in a dataset, such as in generalized category discovery~\cite{gcd}. Should human effort be used to estimate the clustering directly using active clustering, or to estimate $k$ for $k$-means using our approach? To answer this we conduct the following experiment. First, we can measure the accuracy of a clustering as:
\vspace{-5pt}
\begin{equation}
    \textsc{Acc} = \max_{p\in\mathcal{P}(\mathcal{Y})} \frac{1}{N} \sum_{i=1}^N \mathbbm{1}[y_i = p(\hat{y_i})],
\end{equation}
where $\mathcal{P}(\mathcal{Y})$ is the set of all permutations of the class labels. We compare the accuracy of clustering using $k$-means with estimated $k$ using the elbow method, using pc$k$-means, and using $k$-means with $k$ estimated using Nested-IS for tthe same amount of human effort as pc$k$-means. 
Fig.~\ref{fig:kmeans_accuracy} shows the results. 
Surprisingly, we find that accurately estimating $k$ has a larger impact on the quality of clustering than active clustering, which suggests that human effort is better spent to estimate $k$ initially. In the Supplementary Material we describe experiments on GCD on fine-grained classification datasets and show a similar trend.

\begin{figure}[t]
    \centering
    \includegraphics[width=1.0\linewidth]{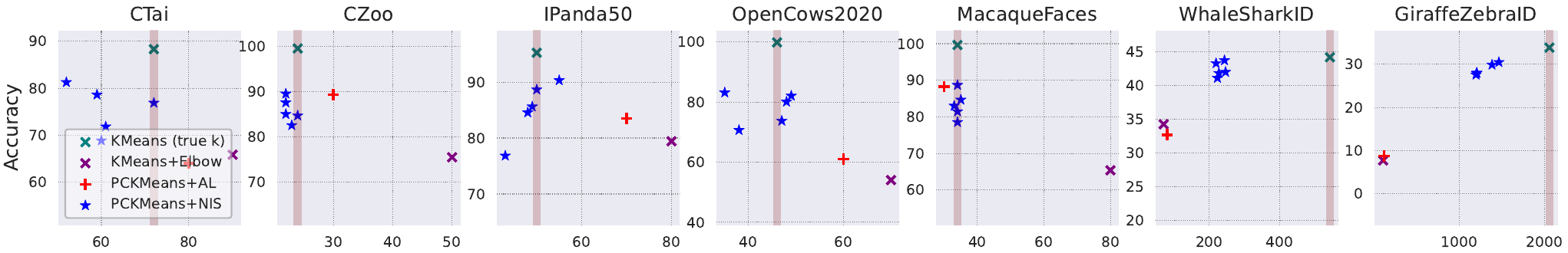}\\
    \vspace{0pt}
    \scriptsize{\text{\# clusters (k)}}\\
    \vspace{-5pt}
    \caption{ 
    \textbf{Measuring Clustering Accuracy}. Pairwise constraints (pc$k$-means+AL) improves the clustering accuracy over $k$-means+elbow, but the clustering accuracy with our estimated $k$ is even better (pc$k$-means+NIS).
    We plot results with five independent runs for our proposed approach (blue stars). 
    The red shaded line indicates the true number of clusters in the dataset.
    }
    \vspace{0pt}
    \label{fig:kmeans_accuracy}
\end{figure}

\vspace{-5pt}
\section{Discussion and Conclusion}\label{sec:discussion}
\vspace{-5pt}
We propose a human-in-the-loop approach to estimate population size when deploying imperfect Re-ID systems. 
By carefully selecting a small fraction of pairs to label (often less than 0.002\% of all edges), our approach produces unbiased estimates of the population size. A key advantage of our method is that it generates confidence intervals which can be used for guiding human effort. This approach can be implemented on top of any Re-ID system, as it requires only a pairwise similarity between images, making it practical for low-resource settings. Our approach adds to the growing literature on statistical estimation techniques~\cite{iscount,discount,ppi} that combine model predictions and ground-truth labels to improve the precision of count estimates. However, we tackle the novel problem of estimating cluster counts, which involves pairwise comparisons.

\section*{Acknowledgments}
This material is based upon work supported by the National Science Foundation under Grants  \#1749854, \#1749833, and \#2329927. We thank Hung Le and Cameron Musco for initial discussions and the Wildlife Datasets team for publicly releasing the datasets and models for animal Re-ID.

%
%

\bibliographystyle{splncs04}
\bibliography{main}

\clearpage
\appendix
\counterwithin{figure}{section}
\counterwithin{table}{section}
\renewcommand\thefigure{\thesection\arabic{figure}}
\renewcommand\thetable{\thesection\arabic{table}}
   
\setcounter{page}{1}
\setcounter{figure}{0}
\setcounter{table}{0}
\setcounter{section}{0}

\section{Experiments on Fine-Grained Classification Datasets} \label{sec:fgc}
\vspace{-10pt}

In this section we present experiments on four fine-grained image classification datasets, where the goal is to estimate the number of categories ---
\textbf{Caltech-UCSD Birds} (CUB)~\cite{cub} consists of 11,788 images of 200 bird species, 
\textbf{Stanford Cars}~\cite{cars} contains 16,185 images of 196 car models, 
\textbf{FGVC Aircraft}~\cite{aircraft} comprises 10,000 images of 100 aircraft models,  
and \textbf{Oxford Flowers}~\cite{flowers} includes 8,189 images of 102 flower categories. Although the names of categories are known, we use the same pairwise setting as the Re-ID tasks.

\vspace{-10pt}
\subsection{Performance of Estimating $k$}
\vspace{-5pt}

Here we show the estimated $k$ as a function of the number of sampled pairs, as with the Re-ID datasets in Fig.~\ref{fig:results_all_datasets}. Similarly to Re-ID tasks, our method outperforms all the baselines when using feature embeddings from an ImageNet-pretrained ResNet50, DINO ViT-B/8, and CLIP ViT-L/14. We calculate the similarity as described in \S~\ref{sec:evaluation}.

\begin{figure}[h]
\vspace{-10pt}

    \centering
    
    \begin{tabular}{ccccc}
    \hspace{-8pt}
    \vspace{-10pt}
       \multirow{-1}{*}[1pt]{\includegraphics[width=0.035\linewidth]{figures/eccv_fig_ylabel.pdf}}  &
       \hspace{-5pt}\includegraphics[width=0.27\linewidth]{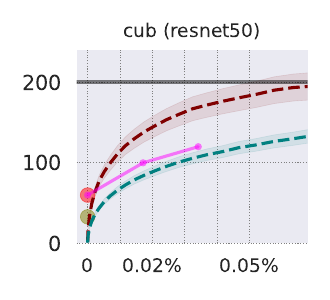}  &  
       \hspace{-12pt}\includegraphics[width=0.27\linewidth]{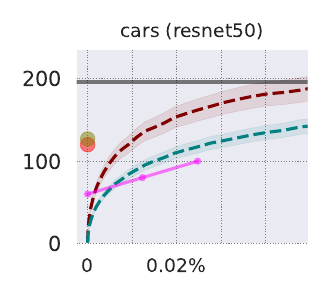}  &
       \hspace{-12pt}\includegraphics[width=0.27\linewidth]{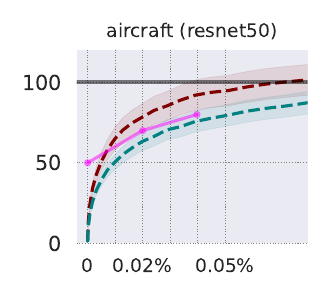} &
       \hspace{-12pt}\includegraphics[width=0.27\linewidth]{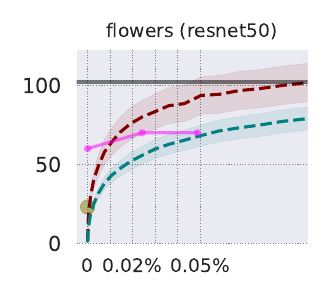} \\ 
        \vspace{-10pt}
        &
       \hspace{-5pt}\includegraphics[width=0.27\linewidth]{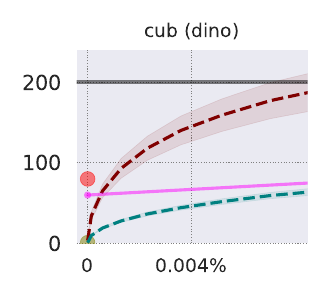}  &  
       \hspace{-12pt}\includegraphics[width=0.27\linewidth]{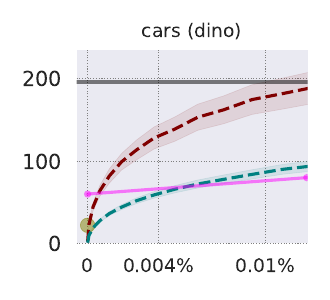}  &
       \hspace{-12pt}\includegraphics[width=0.27\linewidth]{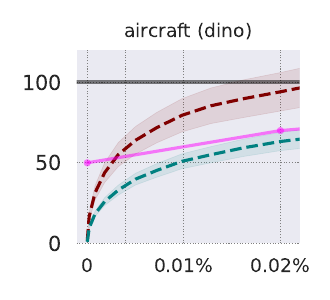} &
       \hspace{-12pt}\includegraphics[width=0.27\linewidth]{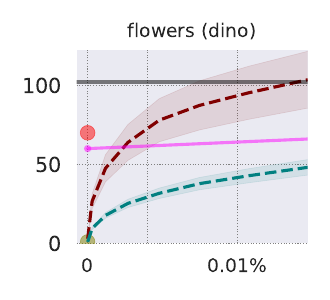} \\
       \vspace{-5pt}
        &
       \hspace{-5pt}\includegraphics[width=0.27\linewidth]{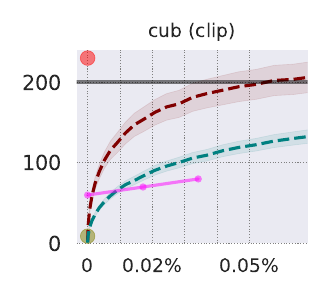}  &  
       \hspace{-12pt}\includegraphics[width=0.27\linewidth]{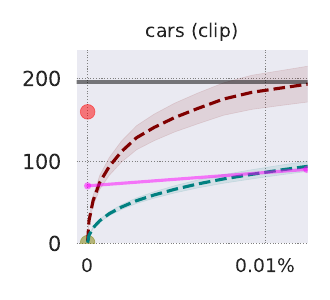}  &
       \hspace{-12pt}\includegraphics[width=0.27\linewidth]{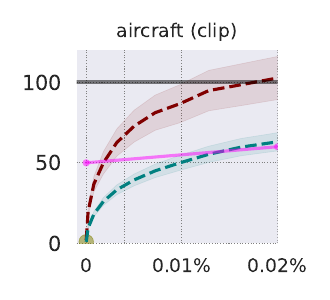} &
       \hspace{-12pt}\includegraphics[width=0.27\linewidth]{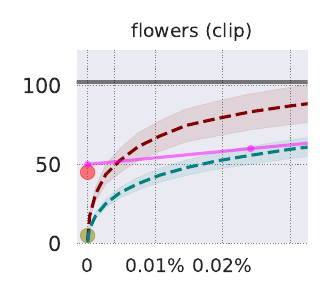} \\
       \multicolumn{5}{c}{\scriptsize \text{\# sampled pairs / \# edges}}\\ 
       \multicolumn{5}{c}{\includegraphics[width=0.6\linewidth]{figures/eccv_fig_legend_top.pdf}} \\
       \multicolumn{5}{c}{\includegraphics[width=0.4\linewidth]{figures/eccv_fig_legend_bottom_.pdf}} 
    \end{tabular}\\
    \vspace{-10pt}
    \caption{
    \textbf{Performance of Estimating $k$ per Human Effort} across fine-grained classification datasets. We use the cosine similarity built from ImageNet pretrained ResNet50, DINO ViT-B/8, and CLIP ViT-L/14 image embeddings. The human effort is measured as the fraction of the sampled pairs and total pairs $|E|$ in the dataset $G$. 
    Our method estimates the true count with less human effort compared to the other baselines.
    Dashed lines indicate the mean estimates and shaded regions indicate the mean 95\% confidence interval across 100 trials.
    }
    \vspace{0pt}
    \label{fig:results_all_class_datasets}
\end{figure}

\subsection{Measuring Clustering Accuracy}
\vspace{-5pt}

Similarly to the Re-ID datasets (See Fig.~\ref{fig:kmeans_accuracy}), we find that accurately estimating $k$ has a bigger impact on the quality of clustering than active clustering, which suggests that human effort is better spent to estimate $k$ initially.

\begin{figure}[h]
    \centering
    \includegraphics[width=0.55\linewidth]{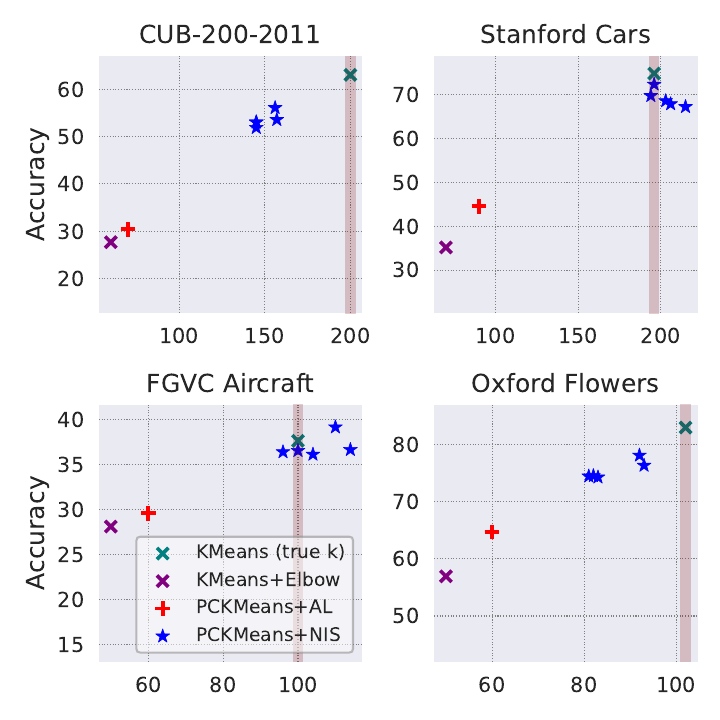}\\
    \footnotesize{\text{\# clusters (k)}}\\
    \vspace{-5pt}
    \caption{\textbf{Clustering accuracy on fine-grained classification datasets} using the right number of clusters. Using the improved $k$ and pairwise constraints (pc$k$-means+AL) improves the clustering accuracy over $k$-means+elbow, while clustering accuracy with our estimated $k$ improves accuracy further (pc$k$-means+NIS).
    We plot results with five estimated $k$s and constraints from our proposed approach (blue stars). 
    The red shaded line indicates the true number of clusters in the dataset.
    }
    \vspace{0pt}
    \label{fig:kmeans_accuracy_sup}
\end{figure}

\end{document}